\newtheorem{definition}{Definition}
\newtheorem{theorem}{Theorem}
\newenvironment{proof}{\noindent{\bf Proof.}}{\hfill$\Box$}
\title{The MacGyver Test - A Framework for Evaluating Machine Resourcefulness and Creative Problem Solving}
\author{\textbf{Vasanth Sarathy} \qquad \textbf{Matthias Scheutz} \\
         Tufts University \\ Medford, MA, USA}
\begin{document}

\maketitle

\begin{abstract}
  Current measures of machine intelligence are either difficult to
  evaluate or lack the ability to test a robot's problem-solving
  capacity in open worlds. We propose a novel evaluation framework
  based on the formal notion of {\em MacGyver Test} which provides a
  practical way for assessing the resilience and resourcefulness of
  artificial agents.
\end{abstract}

\section{Introduction}

Consider a situation when your only suit is covered in lint and you do
not own a lint remover. Being resourceful, you reason that a roll of
duct tape might be a good substitute. You then solve the problem of
lint removal by peeling a full turn's worth of tape and re-attaching
it backwards onto the roll to expose the sticky side all around the
roll.  By rolling it over your suit, you can now pick up all the
lint. This type of everyday creativity and resourcefulness is a
hallmark of human intelligence and best embodied in the 1980s
television series {\em MacGyver} which featured a clever secret
service agent who used common objects around him like paper clips and
rubber bands in inventive ways to escape difficult life-or-death
situations.\footnote{As a society, we place a high value on our human
ability to solve novel problems and remain resilient while doing
so. Beyond the media, our patent system and peer-reviewed publication
systems are additional examples of us rewarding creative problem
solving and elegance of solution.}

Yet, current proposals for tests of machine intelligence do not
measure abilities like resourcefulness or creativity, even though
this is exactly what is needed for artificial agents such as
space-exploration robots, search-and-rescue agents, or even home and
elder-care helpers to be more robust, resilient, and ultimately
autonomous.

In this paper we thus propose an evaluation framework for machine
intelligence and capability consisting of practical tests for
inventiveness, resourcefulness, and resilience.  Specifically, we
introduce the notion of {\em MacGyver Test} (MT) as a practical
alternative to the Turing Test intended to advance research.


\section{Background: Turing Test and its Progeny}

Alan Turing asked whether machines could produce observable behavior
(e.g., natural language) that we (humans) would say required thought
in people \cite{Turing1950}. He suggested that if an interrogator was
unable to tell, after having a long free-flowing conversation with a
machine whether she was dealing with a machine or a person, then we
can conclude that the machine was ``thinking''. Turing did not intend
for this to be a test, but rather a prediction of sorts \cite{cooper2013alan}. Nevertheless, since Turing, others have developed tests for
machine intelligence that were variations of the so-called Turing Test
to address a common criticism that it was easy to deceive the
interrogator.

Levesque et al. designed a reading comprehension test, entitled the
{\em Winograd Schema Challenge}, in which the agent is presented a
question having some ambiguity in the referent of a pronoun or
possessive adjective. The question asks to determine the referent of
this ambiguous pronoun or possessive adjective, by selecting one of
two choices \cite{Levesque2012}. Feigenbaum proposed a variation of
the {\em Turing Test} in which a machine can be tested against a team
of subject matter specialists through natural language
conversation \cite{Feigenbaum2003}. Other tests attempted to study a
machine's ability to produce creative artifacts and solve novel
problems \cite{Boden2010,Bringsjord2001,Bringsjord2016,Riedl2014}.

Extending capabilities beyond linguistic and creative, Harnad's {\em
Total Turing Test} (T3) suggested that the range of capabilities must
be expanded to a full set of robotic capacities found in embodied
systems \cite{Harnad1991}. Schweizer extended the T3 to incorporate
species evolution and development over time and proposed the {\em
Truly Total Turing Test} (T4) to test not only individual cognitive
systems but whether as a species the candidate cognitive architecture
in question is capable of long-term evolutionary
achievement \cite{Schweizer2012}.

Finding that the {\em Turing Test} and its above-mentioned variants
were not helping guide research and development, many proposed a
task-based approach. Specific task-based goals were designed couched
as toy problems that were representative of a real-world
task \cite{Cohen2005}. The research communities benefited greatly from
this approach and focused their efforts towards specific machine
capabilities like object recognition, automatic scheduling and
planning, scene understanding, localization and mapping, and even
game-playing. Many public competitions and challenges emerged that
tested the machine's performance in applying these capabilities --
from image recognition contests and machine learning contests. Some of
these competitions even tested embodiment and robotic capacities,
while combining multiple tasks. For example, the DARPA {\em Robotics
Challenge} tested a robot's ability to conduct tasks relevant to
remote operation including turning valves, using a tool to break
through a concrete panel, opening doors, remove debris blocking
entryways.

Unfortunately, the {\em Turing Test} variants as well as the
task-based challenges are not sufficient as true measures of autonomy
in the real-world. Autonomy requires a multi-modal ability and an
integrated embodied system to interact with the environment, and
achieve goals while solving open-world problems with the limited
resources available. None of these tests are interested in measuring
this sort of \emph{intelligence} and \emph{capability}, the sort that
is most relevant from a practical standpoint.

\section{The MacGyver Evaluation Framework}

The proposed evaluation framework, based on the idea of MacGyver-esque
creativity, is intended to answer the question whether embodied
machines can {\em generate, execute and learn strategies for
identifying and solving seemingly-unsolvable real-world problems}.
The idea is to present an agent with a problem that is unsolvable with
the agent's initial knowledge and observing the agent's problem
solving processes to estimate the probability that the agent is being
creative: if the agent can think outside of its current context, take
some exploratory actions, and incorporate relevant environmental cues
and learned knowledge to make the problem tractable (or at least
computable) then the agent has the general ability to solve open-world
problems more effectively.\footnote{Note that the proposed MT is a
subset of Harnad's T3, but instead of requiring robots to do
``everything real people do'', MT is focused on requiring robots to
exhibit resourcefulness and resilience.  MT is also a subset of
Schweizer's T4 which expands T3 with the notion of species-level
intelligence.}

This type of problem solving framework is typically used in the area
of automated planning for describing various sorts of problems and
solution plans and is naturally suited for defining a
MacGyver-esque problem and a creative solution strategy.  We are now
ready to formalize various notions of the MacGyver evaluation
framework.

\subsection{Preliminaries - Classical Planning}

We define $\mathcal{L}$ to be a first order language with predicates
$p(t_1, \ldots , t_n)$ and their negations $\lnot p(t_1, \ldots ,
t_n)$ , where $t_i$ represents terms that can be variables or
constants. A predicate is grounded if and only if all of its terms are
constants. We will use classical planning notions of a planning domain
in $\mathcal{L}$ that can be represented as $\Sigma = (S, A, \gamma
)$, where $S$ represents the set of states, $A$ is the set of actions
and $\gamma$ are the transition functions. A classical planning
problem is a triple $\mathcal{P}=(\Sigma , s_0, g)$, where $s_0$ is
the initial state and $g$ is the goal state. A plan $\pi$ is any
sequence of actions and a plan $\pi$ is a solution to the planning problem if $g \subseteq \gamma (s_0,\pi )$. We also consider the notion
of state reachability and the set of all successor states
$\hat{\Gamma}(s)$, which defines the set of states reachable from $s$.

\subsection{A MacGyver Problem}

To formalize a MacGyver Problem (MGP), we define a universe and then a
world within this universe. The world describes the full set of
abilities of an agent and includes those abilities that the agent
knows about and those of which it is unaware. We can then define an
agent subdomain as representing a proper subset of the world that is
within the awareness of the agent. An MGP then becomes a planning
problem defined in the world, but outside the agent's current
subdomain.

\begin{definition}
\emph{(Universe).}
\label{def:universe}
We first define a Universe $\mathbb{U} = (S,A,\gamma)$ as a classical
planning domain representing all aspects of the physical world
perceivable and actionable by any and all agents, regardless of
capabilities. This includes all the allowable states, actions and
transitions in the physical universe.
 \end{definition}

\begin{definition}
\emph{(World).}
\label{def:world}
We define a world $\mathbb{W}^t = (S^t, A^t, \gamma^t)$ as a portion
of the Universe $\mathbb{U}$ corresponding to those aspects that are
perceivable and actionable by a particular species $t$ of agent. Each
agent species $t \in T$ has a particular set of sensors and actuators
allowing agents in that species to perceive a proper subset of states,
actions or transition functions. Thus, a world can be defined as
follows:
\begin{multline*}
\mathbb{W}^t = \{(S^t, A^t, \gamma^t) \mid ((S^t \subseteq S) \lor (A^t \subseteq A) \lor (\gamma^t \subseteq \gamma)) \\ \land \lnot ((S^t = S) \land (A^t = A) \land (\gamma^t = \gamma))\}
\end{multline*}
\end{definition}

\begin{definition}
\emph{(Agent Subdomain).}
\label{def:agent}
We next define an agent $\Sigma_i^t = (S_i^t, A_i^t, \gamma_i^t)$ of
type $t$, as a planning subdomain corresponding to the agent's perception
and action within its world $\mathbb{W}^t$. In other words, the agent
is not fully aware of all of its capabilities at all times, and the
agent domain $\Sigma_i^t$ corresponds to the portion of the world that
the agent is perceiving and acting at time
$i$.  \begin{multline*} \Sigma_i^t = \{(S_i^t, A_i^t, \gamma_i^t) \mid
((S_i^t \subset S^t) \lor (A_i^t \subset A^t) \lor
(\gamma_i^t \subset \gamma^t)) \\ \land \lnot ((S_i^t = S^t) \land
(A_i^t = A^t) \land (\gamma_i^t = \gamma^t))\} \end{multline*}
	
\end{definition}

\begin{definition}
	\emph{(MacGyver Problem).}
	\label{def:MGP}
	We define a MacGyver Problem (MGP) with respect to an agent $t$, as a planning problem in the agent's world $\mathbb{W}_t$ that has a goal state $g$ that is currently unreachable by the agent. Formally, an MGP $\mathcal{P}_M = (\mathbb{W}^t, s_0, g)$, where:
	\begin{itemize}
		\item $s_0 \in S^t_i$ is the initial state of the agent
		\item $g$ is a set of ground predicates
		\item $S_g = \{s \in S | g \subseteq s \}$ 
	\end{itemize}
	Where $g \subseteq s^\prime, \forall s^\prime \in \hat{\Gamma}_{\mathbb{W}^t}(s_0) \setminus \hat{\Gamma}_{\Sigma^t_i}(s_0)$

\end{definition}

It naturally follows that in the context of a world $\mathbb{W}_t$,
the MGP $\mathcal{P}_M$ is a classical planning problem which from the
agent's current perspective is unsolvable. We can reformulate the MGP
as a language recognition problem to be able to do a brief complexity
analysis.

\begin{definition}
\emph{(MGP-EXISTENCE).}
Given a set of statements $D$ of planning problems,
let \emph{MGP-EXISTENCE($D$)} be the set of all statements $P \in D$
such that $P$ represents a MacGyver Problem $\mathcal{P}_M$, without
any syntactical restrictions.
\end{definition}

\begin{theorem}
\label{thm:decidable}
MGP-EXISTENCE is decidable.
\end{theorem}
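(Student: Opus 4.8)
The plan is to exhibit an explicit decision procedure that, given a statement $P \in D$, always halts and correctly reports whether $P$ encodes a MacGyver Problem in the sense of Definition~\ref{def:MGP}. The crux of the argument is the observation that every clause of that definition is effectively checkable once we note that the classical planning domains considered here are built over a first-order language $\mathcal{L}$ with a fixed finite vocabulary: the set of ground predicates is then finite, hence so are the state set $S^t$ of any world $\mathbb{W}^t$ and the state set $S_i^t$ of any agent subdomain $\Sigma_i^t$, and reachability in a finite transition system is computable.

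First I would parse $P$ to recover its components: the world $\mathbb{W}^t = (S^t, A^t, \gamma^t)$, the agent subdomain $\Sigma_i^t = (S_i^t, A_i^t, \gamma_i^t)$, the initial state $s_0$, and the goal $g$. Verifying syntactic well-formedness --- that $g$ is a set of ground predicates, that $\Sigma_i^t$ stands in the proper-subset relation to $\mathbb{W}^t$ demanded by Definition~\ref{def:agent}, and that $s_0 \in S_i^t$ --- is a finite test. Next I would compute the two reachability sets $\hat{\Gamma}_{\mathbb{W}^t}(s_0)$ and $\hat{\Gamma}_{\Sigma_i^t}(s_0)$ by the usual breadth-first fixpoint closure under the respective transition functions; since both state spaces are finite, each closure terminates. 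With these sets in hand, the remaining MGP conditions --- that some reachable state of $\mathbb{W}^t$ satisfies $g$ while the agent can reach no state satisfying $g$ within $\Sigma_i^t$ (equivalently, the stated condition on $\hat{\Gamma}_{\mathbb{W}^t}(s_0)\setminus\hat{\Gamma}_{\Sigma_i^t}(s_0)$) --- amount to finitely many subset tests of the form $g \subseteq s'$. If all checks pass the procedure outputs ``yes'', otherwise ``no''; as every step terminates, this decides MGP-EXISTENCE.

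The main obstacle to make rigorous is the status of the Universe $\mathbb{U}$ and the species hierarchy of Definitions~\ref{def:universe}--\ref{def:agent}: these are described informally as ``all aspects of the physical world'', and a naive reading suggests an unbounded object. I would dispense with this by noting that a statement $P$ supplies only a finite description of $\mathbb{W}^t$ and $\Sigma_i^t$, so $\mathbb{U}$ is never materialized --- the only facts used about it are the proper-subset constraints, which are inherited syntactically from the descriptions. A second point to state explicitly is that ``without any syntactical restrictions'' in the definition of MGP-EXISTENCE constrains the form of the operators, not the finiteness of $\mathcal{L}$; I would therefore make the standing assumption (standard in classical planning) that $\mathcal{L}$ has finitely many constants, which is exactly what forces the state spaces to be finite and the reachability computations to halt. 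With that in place decidability follows, and I would remark that the result says nothing about efficiency, since plan existence / reachability for classical planning is already intractable --- foreshadowing the complexity discussion to come.
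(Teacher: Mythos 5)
Your proposal is correct and follows essentially the same route as the paper's own proof: both rest on the finiteness of the state spaces of $\mathbb{W}^t$ and $\Sigma_i^t$, which makes a brute-force reachability check in the world versus the agent subdomain terminate. You simply spell out the details more carefully --- the explicit fixpoint computation of $\hat{\Gamma}$, the subset tests against $g$, and the standing assumption that $\mathcal{L}$ has finitely many constants (which the paper leaves implicit) --- all of which are refinements rather than a different argument.
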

\begin{proof}
The proof is simple. The number of possible states in the agent's
subdomain $\Sigma_i^t$ and the agent's world $\mathbb{W}^t$ are
finite. So, it is possible to do a brute-force search to see whether a
solution exists in the agent's world but not in the agent's initial
domain.
\end{proof}

\begin{theorem}
\label{thm:expspace}
MGP-EXISTENCE is EXPSPACE-\textit{complete}.	
\end{theorem}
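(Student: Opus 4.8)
\noindent The plan is to prove the two directions separately, in each case reducing to the classical complexity of \textsc{plan-existence} for unrestricted (function-symbol-free) classical planning, which is known to be \textsc{EXPSPACE}-complete (Erol, Nau, and Subrahmanian, 1995). This is exactly the regime signalled by the phrase ``without any syntactical restrictions'' in the definition of \textsc{MGP-EXISTENCE}, so I expect the characterization to coincide with the known planning bound.

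For membership, I would argue that deciding whether an input statement $P$ encodes a genuine MGP $\mathcal{P}_M = (\mathbb{W}^t, s_0, g)$ reduces to three checks: (i) $P$ is well formed, i.e.\ $s_0 \in S_i^t$, $g$ is a set of ground predicates, and the inclusion and strict-properness conditions of Definitions~\ref{def:world} and~\ref{def:agent} hold; (ii) the goal is reachable in the world, i.e.\ there is $s' \in \hat{\Gamma}_{\mathbb{W}^t}(s_0)$ with $g \subseteq s'$; and (iii) the goal is \emph{not} reachable in the agent's subdomain, i.e.\ $g \not\subseteq s'$ for every $s' \in \hat{\Gamma}_{\Sigma_i^t}(s_0)$. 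Check (ii) is literally a \textsc{plan-existence} instance, hence in \textsc{EXPSPACE}; check (iii) is the complement of a \textsc{plan-existence} instance, and since deterministic space is closed under complementation, \textsc{plan-nonexistence} is also in \textsc{EXPSPACE}; check (i) is a bounded collection of syntactic comparisons carried out within exponential space. Performing the three tests in sequence while reusing the work tape keeps the whole procedure in \textsc{EXPSPACE}.

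For hardness, I would give a polynomial-time many-one reduction from \textsc{plan-existence}. Given a classical planning problem $\mathcal{P} = (\Sigma, s_0, g)$ with $\Sigma = (S, A, \gamma)$, I build the MGP instance whose world $\mathbb{W}^t$ is $\Sigma$ padded with one fresh dummy predicate and one unreachable dummy state (so that $\mathbb{W}^t$ is a proper portion of the Universe, as Definition~\ref{def:world} demands) and at least one action, whose initial state is $s_0$, whose goal is $g$, and whose agent subdomain $\Sigma_i^t$ has $S_i^t = \{s_0\}$ and $A_i^t = \emptyset$, which is a strict subdomain of $\mathbb{W}^t$ with $s_0 \in S_i^t$. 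If $g \subseteq s_0$ already, the reduction instead outputs a fixed yes-instance (the empty plan solves $\mathcal{P}$). Otherwise $\hat{\Gamma}_{\Sigma_i^t}(s_0) = \{s_0\}$ and $g \not\subseteq s_0$, so condition (iii) holds automatically, and the constructed instance is a genuine MGP iff condition (ii) holds, i.e.\ iff $\mathcal{P}$ has a solution plan. The construction is computable in polynomial time, so \textsc{MGP-EXISTENCE} is \textsc{EXPSPACE}-hard, and with membership it is \textsc{EXPSPACE}-complete.

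The main obstacle is not the overall structure but the bookkeeping around the nonstandard definitional constraints: making sure the reduction really yields a \emph{well-formed} MGP (the strict-subset requirements on worlds and subdomains, and $s_0 \in S_i^t$) without perturbing reachability, and confirming that those same well-formedness conditions can be checked on an arbitrary input within exponential space given whatever implicit encoding of ``portions of the Universe'' the framework uses. A secondary point worth stating explicitly is that adding the ``goal unreachable for the agent'' clause does not raise the complexity above \textsc{EXPSPACE}, which is precisely where closure of deterministic space under complement is invoked.
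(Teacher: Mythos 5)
Your membership argument follows the same decomposition as the paper's: reduce the question to a PLAN-EXISTENCE query on the world and one on the agent subdomain, each of which lies in EXPSPACE for unrestricted classical planning. You add the worthwhile observation that the subdomain test is really the \emph{complement} of PLAN-EXISTENCE and therefore needs closure of deterministic space under complementation, a point the paper's proof glosses over. Your hardness reduction, however, is genuinely different. The paper keeps the agent subdomain equal to the given planning domain $\Sigma$ and enlarges the \emph{world} by one fresh goal state reachable from every state; you instead keep the world essentially equal to $\Sigma$ (padded so the properness condition of Definition~\ref{def:world} holds) and shrink the \emph{agent subdomain} to the trivial one containing only $s_0$ and no actions. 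Your construction has the advantage that the ``goal unreachable by the agent'' clause of Definition~\ref{def:MGP} holds vacuously, so membership of the constructed instance in MGP-EXISTENCE is exactly equivalent to world-level solvability, i.e., to the original PLAN-EXISTENCE question. In the paper's construction, by contrast, whenever the original instance \emph{is} solvable the goal is already reachable inside the agent's subdomain, which is in tension with the requirement that the goal be currently unreachable by the agent, so the biconditional as stated there takes more care to verify (it goes through most naturally as a reduction from the complement problem, which is harmless since EXPSPACE is closed under complement, but your version avoids the issue entirely). Your explicit handling of the degenerate case $g \subseteq s_0$ by emitting a fixed yes-instance is another detail the paper omits. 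Both reductions are polynomial-time, so either route establishes EXPSPACE-hardness.
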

\begin{proof}
\textit(Membership). An MGP amounts to looking to see if the problem is a
solvable problem in the agent-domain. Upon concluding it is not
solvable, the problem then becomes one of determining if it is a
solvable problem in the world corresponding to the agent's
species. Each of these problems are PLAN-EXISTENCE problems, which are
in EXPSPACE for the unrestricted case \cite{Ghallab2004}. Thus,
MGP-EXISTENCE is in EXPSPACE.

\textit(Hardness). We can reduce the classical planning
problem $P(\Sigma,s_0,g)$ to an MGP (PLAN-EXISTENCE $\leq_m^p$
MGP-EXISTENCE), by defining a new world $\mathbb{W}$. To define a new
world, we extend the classical domain by one state, defining the new
state as a goal state, and adding actions and transitions from every
state to the new goal state. We also set the agent domain to be the
same as the classical planning domain. Now, $P(\Sigma,s_0,g) \in $
PLAN-EXISTENCE iff $P(\mathbb{W},s_0,g) \in $ MGP-EXISTENCE for agent
with domain $\Sigma$. Thus, MGP-EXISTENCE is
EXPSPACE-\textit{hard}.
\end{proof}

\subsection{Solving a MacGyver Problem}

From Theorems~\ref{thm:decidable} and \ref{thm:expspace}, we know
that, while possible, it is intractable for an agent to know whether a
given problem is an MGP. From an agent's perspective, solving an MGP
is like solving any planning problem with the additional requirement
to sense or learn some previously unknown state, transition function
or action. Specifically, solving an MGP will involve performing some
actions in the environment, making observations, extending and
contracting the agent's subdomain and exploring different contexts.


\subsubsection{Solution Strategies}

\begin{definition}
\emph{(Agent Domain Modification).}  \label{def:domainExt} A
domain modification $\Sigma^{t*}_j$ involves either a domain extension
or a domain contraction\footnote{In the interest of brevity we will
only consider domain extensions for now.}. A domain extension
$\Sigma^{t+}_j$ of an agent is an Agent-subdomain at time $j$ that is
in the agent's world $\mathbb{W}^t$ but not in the agent's subdomain
$\Sigma^t_i$ in the previous time $i$, such that
$\Sigma^t_i \preceq \Sigma^t_j$. The agent extends its subdomain
through sensing and perceiving its environment and its own self -
e.g., the agent can extend its domain by making an observation, receiving
advice or an instruction or performing
introspection. Formally, \begin{multline*} \Sigma^{t+}_j
= \{(S_j^{t+}, A_j^{t+}, \gamma_j^{t+}) \mid (S_j^{t+} \subset
S^t \setminus S_i^t) \\ \lor (A_j^{t+} \subset A^t \setminus
A_i^t) \lor
(\gamma_j^{t+} \subset \gamma^t \setminus \gamma_i^t)\} \end{multline*}
	
	
The agent subdomain that results from a domain extension is
$\Sigma_j^t = \Sigma_i^t \cup \Sigma^{t+}_j$

A domain modification set
$\Delta_{\Sigma^t_i}= \{ \Sigma^{t*}_1, \Sigma^{t*}_2, \ldots
,\Sigma^{t*}_n \}$ is a set of $n$ domain modifications on subdomain
$\Sigma^t_i$. Let $\Sigma_{\Delta}^{t}$ be the subdomain resulting
from applying $\Delta_{\Sigma^t_i}$ on $\Sigma^t_i$
	
\end{definition}

\begin{definition}
\emph{(Strategy and Domain-Modifying Strategy).}
A {\em strategy} is a tuple $\omega = (\pi, \Delta)$ of a plan $\pi$ and a
set $\Delta$ of domain modifications. A domain-modifying strategy
$\omega^C$ involves at least one domain modification, i.e.,
$\Delta \neq \emptyset$.
\end{definition}

\begin{definition}
\emph{(Context).}
A {\em context} is a tuple $\mathbb{C}_i = (\Sigma_i,s_i)$
representing the agent's subdomain and state at time $i$.
\end{definition}

We are now ready to define an insightful strategy as a set of actions
and domain modifications that the agent needs to perform to allow for
the goal state of the problem to be reachable by the agent.

\begin{definition}
\emph{(Insightful Strategy).}
Let $\mathbb{C}_i = (\Sigma^t_i, s_0)$ be the agent's current
context. Let $\mathcal{P}_M=(\mathbb{W}^t, s_0, g)$ be an MGP for the
agent in this context. An insightful strategy is a domain-modifying
strategy $\omega^I= (\pi^I, \Delta^I)$ which when applied in
$\mathbb{C}_i$ results in a context $\mathbb{C}_j= (\Sigma^t_j, s_j)$,
where $\Sigma^t_j = \Sigma^t_{\Delta^I}$ such that $g \subseteq
s^\prime, \forall s^\prime \in \hat{\Gamma}_{\Sigma_j^t}(s_j)$.
\end{definition}

Formalizing the insightful strategy in this way is somewhat analogous
to the moment of insight that is reached when a problem becomes
tractable (or in our definition computable) or when solution plan
becomes feasible. Specifically, solving a problem involves some amount
of creative exploration and domain extensions and contractions until
the point when the agent has all the information it needs within its
subdomain to solve the problem as a classical planning problem, and
does not need any further domain extensions. We can alternatively
define an insightful strategy in terms of when the goal state is not
only reachable, but a solution can be discovered in polynomial
time. We will next review simple toy examples to illustrate the
concepts discussed thus far.

\section{Operationalizing a MacGyver Problem}

We will consider two examples that will help operationalize the
formalism presented thus far. The first is a modification of the
popular {\em Blocks World} planning problem. The second is a more
practical task of tightening screws, however, with the caveat that
certain common tools are unavailable and the problem solver must
improvise. We specifically discuss various capabilities that an agent
must possess in order to overcome the challenges posed by the examples.

\subsection{Toy Example: Block-and-Towel World}

Consider an agent tasked with moving a block from one location to
another which the agent will not be able to execute without first
discovering some new domain information.  Let the agent subdomain
$\Sigma$ consist of a set of locations $l = \{L1, L2, L3\}$, two
objects $o = \{T,B\}$ a towel and a block, and a function
$\mathit{locationOf}:o \rightarrow l$ representing the location of
object $o$. Suppose the agent is aware of the following predicates and
their negations:

\begin{itemize}
\itemsep-0.2em
\item $at(o,l)$: an object $o$ is at location $l$
\item $near(l)$: the agent is near location $l$
\item $touching(o)$: the agent is touching object $o$
\item $holding(o)$: the agent is holding the object $o$
\end{itemize}

\noindent We define a set of actions $A$ in the agent domain as follows:
	
\begin{itemize} \itemsep-0.2em \item $reach(o,l)$: Move the
		robot arm to near the object $o$ \\ precond:
		$\{at(o,l)\}$ \\ effect: $\{near(l)\}$ \item
		$grasp(o,l)$: Grasp object $o$ at $l$ \\ precond:
		$\{near(l),at(o,l)\}$ \\ effect:
		$\{touching(o)\}$ \item $\mathit{lift}(o,l)$: Lift
		object $o$ from $l$ \\ precond:
		$\{touching(o),at(o,l)\}$ \\ effect:
		$\{holding(o),\lnot at(o,l),\lnot near(l)\}$ \item
		$carryTo(o,l)$: Carry object $o$ to $l$ \\ precond:
		$\{holding(o)\}$ \\ effect: $\{\lnot holding(o),
		at(o,l)\}$ \item $release(o,l)$: Release object $o$ at
		$l$ \\ precond: $\{touching(o),at(o,l)\}$ \\ effect:
		$\{\lnot touching(o), at(o,l)\}$ \end{itemize}

Given an agent domain $\Sigma$, and a start state $s_0$ as defined
below, we can define the agent context $\mathbb{C} = (\Sigma, s_0)$ as
a tuple with the agent domain and the start state.
\begin{multline*}
s_0 = \{at(T,L1), at(B,L2), \lnot holding(T), \lnot
holding(B), \\ \lnot near(L1), \lnot near(L2), \lnot near(L3), \lnot
touching(T), \\ \lnot touching
(B), \mathit{locationOf}(B)=L2, \\ \mathit{locationOf}(T)=L1 \}
\end{multline*}

Consider a simple planning problem for the Block-and-Towel World in
which the agent must move the block $B$ from location $L2$ to
$L3$. The agent could execute a simple plan as follows to solve the
problem:
\begin{multline*}
\pi_1 = \{reach(B,\mathit{locationOf}(B)), \\ grasp(B,\mathit{locationOf}(B)), \\ lift(B,\mathit{locationOf}(B)), carryTo(B,L3), release(B,L3)\}
\end{multline*}

%

During the course of the plan, the agent touches and holds the block
as it moves it from location $L2$ to $L3$. Using a similar plan, the
agent could move the towel to any location, as well.

Now, consider a more difficult planning problem in which the agent is
asked to move the block from $L2$ to $L3$ without touching it. Given
the constraints imposed by the problem, the goal state, is not
reachable and the agent must discover an alternative way to move the
block. To do so, the agent must uncover states in the world
$\mathbb{W}^t$ that were previously not in its subdomain $\Sigma$. For
example, the agent may learn that by moving the towel to location
$L2$, the towel ``covers" the block, so it might discover a new
predicate $covered(o1,o2)$ that would prevent it from touching the
block. The agent may also uncover a new action
$\mathit{push}(o,l1,l2)$ which would allow it to push the object along
the surface. To uncover new predicates and actions, the agent may have
to execute an insightful strategy $\omega^I$. Once the agent's domain
has been extended, the problem becomes a standard planning problem for
which the agent can discover a solution plan for covering the block
with the towel and then pushing both the towel and the block from
location $L2$ to $L3$. In order to autonomously resolve this problem,
the agent must be able to recognize when it is stuck, discover new
insights, and build new plans. Additionally, the agent must be able to
actually execute this operation in the real-world. That is, the agent
must have suitable robotic sensory and action mechanisms to locate and
grasp and manipulate the objects.

\subsection{Practical Example: Makeshift Screwdriver}

Consider the practical example of attaching or fastening things
together, a critical task in many domains, which, depending on the
situation, can require resilience to unexpected events and
resourcefulness in finding solutions.  Suppose an agent must affix two
blocks from a set of blocks $b = \{B1,B2\}$. In order to do so, the
agent has a tool box containing a set of tools $t = \{screwdriver,
plier, hammer\}$ and a set of fasteners $f= \{screw,nail\}$. In
addition, there are other objects in the agent's environment $o
=\{towel,coin,mug,duct tape\}$. Assume the agent can sense the following
relations (i.e., predicates and their negations) with respect to the
objects \footnote{Given space limitations, we have not presented the
entire domain represented by this example. Nevertheless, our analysis
of the MacGyver-esque properties should still hold.}:

\begin{itemize}
	\itemsep-0.2em 
	\item $isAvailable(t)$: tool $t$ is available to use 
	\item $\mathit{fastenWith}(t,f)$: tool $t$ is designed for fastener $f$
	\item $grabWith(t)$: tool $t$ is designed to grab a fastener $f$
	\item $isHolding(t)$: agent is holding tool $t$
	\item $isReachable(t,f)$: tool $t$ can reach fastener $f$
	\item $isCoupled(t,f)$: tool $t$ is coupled to fastener $f$
	\item $isAttachedTo(f,b1,b2)$: fastener $f$ is attached to or inserted into blocks $b1$ and $b2$
	\item $isSecured(f,b1,b2)$: fastener $f$ is tightly secured into blocks $b1$ and $b2$.
\end{itemize}

We can also define a set of actions in the agent subdomain as follows:

\begin{itemize}
\itemsep-0.2em 
	\item $select(t,f)$: select/grasp a tool $t$ to use with fastener $f$ \\
		precond: $\{isAvailable(t), fastenWith(t,f)\}$\\
		effect: $\{isHolding(t)\}$
	\item $grab(t,f)$: Grab a fastener $f$ with tool $t$.\\
		precond: $\{isHolding(t),grabWith(t)\}$\\
		effect: $\{isCoupled(t,f)\}$
	\item $placeAndAlign(f,b1,b2)$: Place and align fastener $f$, and blocks $b1$ and $b2$\\
		effect: $\{isAttachedTo(f,b1,b2)\}$
	\item $reachAndEngage(t,f)$: Reach and engage the tool $t$ with fastener $f$ \\
		precond: $\{isHolding(t),fastenWith(t,f),$\\$isReachable(t,f)\}$\\
		effect: $\{isCoupled(t,f)\}$
	\item $install(f,t,b1,b2)$: Install the fastener $f$ with tool $t$\\
		precond: $\{isCoupled(t,f),isAttachedTo(f,b1,b2)\}$\\
		effect: $\{isSecured(f,b1,b2\}$
\end{itemize}

Now suppose a screw has been loosely inserted into two blocks
($isAttachedTo(screw,B1,B2)$) and needs to be tightened ($\lnot
isSecured(screw,B1,B2)$). Tightening a screw would be quite
straightforward by performing actions $select(),reachAndEngage(),
install()$. But for some reason the screwdriver has gone missing
($\lnot isAvailable(screwdriver)$).

This is a MacGyver problem because there is no way for the agent,
given its current subdomain of knowledge, to tighten the screw as the
goal state of $isSecured(screw,B1,B2)$ is unreachable from the agent's
current context. Hence, the agent must extend its domain. One approach
is to consider one of the non-tool objects, e.g., a coin could be used
as a screwdriver as well, while a mug or towel might not.

The agent must be able to switch around variables in its existing
knowledge to expose previously unknown capabilities of tools. For
example, by switching $grab(t,f)$ to $grab(t,o)$ the agent can now
explore the possibility of grabbing a coin with a plier. Similarly, by
relaxing constraints on variables in other relations, the agent can
perform a $reachAndEngage(o,f)$ action whereby it can couple a
makeshift tool, namely the coin, with the screw.

What if the screw was in a recessed location and therefore difficult
to access without an elongate arm? While the coin might fit on the
head of the screw, it does not have the necessary elongation and would
not be able to reach the screw. An approach here might be to grab the
coin with the plier and use that assembly to tighten the screw, maybe
even with some additionally duct tape for extra support. As noted
earlier, generally, the agent must be able to relax some of the
pre-existing constraints and generate new actions. By exploring and
hypothesizing and then testing each variation, the agent can expand
its domain.

This example, while still relatively simple for humans, helps us
highlight the complexity of resources needed for an agent to perform
the task. Successfully identifying and building a makeshift
screwdriver when a standard screwdriver is not available shows a
degree of resilience to events and autonomy and resourcefulness that
we believe to be an important component of everyday creativity and
intelligence. By formulating the notion of resourcefulness in this
manner, we can better study the complexity of the cognitive processes
and also computationalize these abilities and even formally measure
them.

\subsubsection{Agent Requirements: Intelligence and Physical Embodiment}

When humans solve problems, particularly creative insight problems,
they tend to use various heuristics to simplify the search space and
to identify invariants in the environment that may or may not be
relevant \cite{knoblich2009psychological}. An agent solving an MGP
must possess the ability to execute these types of heuristics and
cognitive strategies. Moreover, MGPs are not merely problems in the
classical planning sense, but require the ability to discover when a
problem is unsolvable from a planning standpoint and then discover,
through environmental exploration, relevant aspects of its
surroundings in order to extend its domain of knowledge. Both these
discoveries in turn are likely to require additional cognitive
resources and heuristics that allow the agent to make these
discoveries efficiently. Finally, the agent must also be able to remember
this knowledge and be able to, more efficiently, solve future
instances of similar problems.

From a real-world capabilities standpoint, the agent must possess the
sensory and action capabilities to be able to execute this exploration
and discovery process, including grasping and manipulating unfamiliar
objects. These practical capabilities are not trivial, but in
combination with intelligent reasoning, will provide a clear
demonstration of agent autonomy while solving practical real-world
problems.

These examples provide a sense for the types of planning problems that
might qualify as an MGP. Certain MGPs are more challenging than others
and we will next present a theoretical measure for the difficulty of
an MGP.

\subsection{Optimal Solution and M-Number}

Generally, we can assume that a solvable MGP has a best solution that
involves an agent taking the most effective actions, making the
required observations as and when needed and uncovering a solution
using the most elegant strategy. We formalize these notions by first
defining optimal solutions and then the M-Number, which is the measure
of the complexity of an insightful strategy in the optimal solution.

	

\begin{definition}
	\emph{(Optimal Solutions).}
	Let $\mathcal{P}_M=(\mathbb{W}^t, s_0, g)$ be an MGP for the agent. Let $\hat{\pi}$ be an optimal solution plan to $\mathcal{P}_M$. 	
	A set of optimal domain modifications is a set of domain modifications $\hat{\Delta}$ is the minimum set of domain modifications needed for the inclusion of actions in the optimal solution plan $\hat{\pi}$.
	An optimal solution strategy is a solution strategy $\hat{\omega} = (\hat{\pi},\hat{\Delta})$, where $\hat{\Delta}$ is a set of optimal domain modifications. 
\end{definition}

\begin{definition}
	\emph{(M-Number).}
	Let $\mathcal{P}_M=(\mathbb{W}^t, s_0, g)$ be an MGP for the agent. Let $\hat{\Omega} = \{\hat{\omega}_1, \ldots, \hat{\omega}_n\}$ be the set of $n$ optimal solution strategies. For each $\hat{\omega}_i \in \hat{\Omega}$, there exists an insightful strategy $\hat{\omega}_i^I \subseteq \hat{\omega}_i$. Let $\hat{\Omega}^I = \{\hat{\omega}_1^I, \ldots, \hat{\omega}_n^I\}$ be the set of optimal insightful strategies. The set $\hat{\Omega}^I$ can be represented by a program $p$ on some prefix universal Turing machine capable of listing elements of $\hat{\Omega}^I$ and halting. We can then use Kolmogorov complexity of the set of these insightful strategies, $K(\hat{\Omega}^I) := \min_{p \in \mathbb{B}^*} \{|p| : \mathcal{U}(p) \mbox{ computes } \hat{\Omega}^I\}$ \cite{Li1997}. We define the intrinsic difficulty of the MGP (M-Number or $\mathcal{M}$) as the Kolmogorov complexity of the set of optimal insightful strategies $\hat{\Omega}^I$, $\mathcal{M} = K(\hat{\Omega}^I)$.
\end{definition}

As we have shown MGP-EXISTENCE is intractable and measuring the
intrinsic difficulty of an MGP is not computable if we use Kolmogorov
complexity. Even if we instead choose to use an alternative and
computable approximation to Kolmogorov complexity (e.g., Normalized
Compression Distance), determining the M-Number is difficult to do as
we must consult an oracle to determine the optimal solution. In
reality, an agent does not know that the problem it is facing is an
MGP and even if it did know this, the agent would have a tough time
determining how well it is doing.

\subsection{Measuring Progress and Agent Success} 

When we challenge each other with creative problems,we often know if
the problem-solver is getting closer (``warmer") to the solution. We
formalize this idea using Solomonoff Induction. To do so, we will
first designate a ``judge" who, based on a strategy currently executed
by the agent, guesses the probability that, in some finite number of
steps, the agent is likely to have completed an insightful strategy.

Consider an agent performing a strategy $\omega$ to attempt to solve
an MGP $\mathcal{P}_M$ and a judge evaluating the performance of the
agent. The judge must first attempt to understand what the agent is
trying to do. Thus, the judge must first hypothesize an agent model
that is capable of generating $\omega$.

Let the agent be defined by the probability measure $\mu
(\omega \mid \mathcal{P}_M, \mathbb{C})$, where this measure
represents the probability that an agent generates a strategy $\omega$
given an MGP $\mathcal{P}_M$ when in a particular context
$\mathbb{C}$. The judge does not know $\mu$ in advance and the measure
could change depending on the type of agent. For example, a random
agent could have $\mu(\omega) = 2 ^{-|\omega|}$, whereas a MacGyver
agent could be represented by a different probability measure. Not
knowing the type of agent, we want the judge to be able to evaluate as
many different types of agents as possible. There are infinitely many
different types of agents and accordingly infinitely many different
hypotheses $\mu$ for an agent. Thus, we cannot simply take an expected
value with respect to a uniform distribution, as some hypotheses must
be weighed more heavily than others.

Solomonoff devised a universal distribution over a set of computable
hypotheses from the perspective of computability
theory \cite{solomonoff1960preliminary}. The universal prior of a
hypothesis was defined:

$$P(\mu) \equiv \sum\limits_{p: \mathcal{U}(p,\omega)=\mu(\omega)} 2^{-|p|}$$

The judge applies the principle of Occam's razor - given many
explanations (in our case hypotheses), the simplest is the most
likely, and we can approximate $P(\mu) \equiv 2^{-K(\mu)}$, where
$K(\mu)$ is the Kolmogorov complexity of measure $\mu$.
%

To be able to measure the progress of an agent solving an MGP, we must
be able to define a performance metric $R_\mu$. In this paper, we do
not develop any particular performance metric, but suggest that a
performance metric be proportional to the level of resourcefulness and
creativity of the agent. Generally, measuring progress may depend on
problem scope, control variables, length and elegance of the solution
and other factors. Nevertheless, a simple measure of this sort can
serve as a placeholder to develop our theory.

We are now ready to define the performance or progress of an agent
solving an MGP.

\begin{definition}
\emph{(Expected Progress).}
Consider an agent in context $\mathbb{C} = (\Sigma_i^t, s_0)$ solving
an MGP $\mathcal{P}_M = (\mathbb{W}^t, s_0, g)$. The agent has
executed strategy $\omega$ comprising actions and domain modifications. Let
$U$ be the space of all programs that compute a measure of agent
resourcefulness. Consider a judge observing the agent and fully aware
of the agent's context and knowledge and the MGP itself. Let the judge
be prefix universal Turing machine $\mathcal{U}$ and let $K$ be the
Kolmogorov complexity function. Let the performance metric, which is
an interpretation of the cumulative state of the agent resourcefulness
in solving the MGP, be $R_\mu$. The expected progress of this agent as
adjudicated by the judge is:
$$M(\omega) \equiv \sum\limits_{\mu \in U}2^{-K(\mu)}\cdot R_\mu  $$
\end{definition}

Now, we are also interested in seeing whether the agent, given this
strategy $\omega$ is likely to improve its performance over the next
$k$ actions. The judge will need to predict the continuation of this
agent's strategy taking all possible hypotheses of the agent's
behavior into account. Let $\omega^+$ be a possible continuation and
let $^\smallfrown$ represent concatenation.

$$M(\omega^\smallfrown \omega^+ \mid \omega ) = \frac{M(\omega^\smallfrown \omega^+)}{M(\omega)}$$

The judge is a Solomonoff predictor such that the predicted finite
continuation $\omega^+$ is likely to be one in which
$\omega^\smallfrown \omega^+$ is less complex in the Kolmogorov
sense. The judge measures the state of the agent's attempts at solving
the MGP and can also predict how the agent is likely to perform in the
future.

\section{Conclusion and Future Work}

In the Apollo 13 space mission, astronauts together with ground
control had to overcome several challenges to bring the team safely
back to Earth \cite{lovell2006apollo}. One of these challenges was
controlling carbon dioxide levels onboard the space craft: ``For two
days straight [they] had worked on how to jury-rig the Odyssey’s
canisters to the Aquarius’s life support system. Now, using materials
known to be available onboard the spacecraft -- a sock, a plastic bag,
the cover of a flight manual, lots of duct tape, and so on -- the crew
assembled a strange contraption and taped it into place. Carbon
dioxide levels immediately began to fall into the safe
range.'' \cite{cass2005apollo,apollomission}.

We proposed the {\em MacGyver Test} as a practical alternative to the
{\em Turing} Test and as a formal alternative to robotic and machine
learning challenges.  The MT does not require any specific internal
mechanism for the agent, but instead focuses on observed
problem-solving behavior akin to the Apollo 13 team. It is flexible
and dynamic allowing for measuring a wide range of agents across
various types of problems. It is based on fundamental notions of set
theory, automated planning, Turing computation, and complexity theory
that allow for a formal measure of task difficulty. Although
Kolmogorov complexity and the Solomonoff Induction measures are not
computable, they are formally rigorous and can be substituted with
computable approximations for practical applications.

In future work, we plan to develop more examples of MGPs and also
begin to unpack any interesting aspects of the problem's structure,
study its complexity and draw comparisons between problems. We believe
that the MT formally captures the concept of practical intelligence
and everyday creativity that is quintessentially human and practically
helpful when designing autonomous agents. Most importantly, the intent
of the MT and the accompanying MGP formalism is to help guide research
by providing a set of mathematically formal specifications for
measuring AI progress based on an agent's ability to solve
increasingly difficult MGPs. We thus invite researchers to develop
MGPs of varying difficulty and design agents that can solve them.

\bibliographystyle{named}
\bibliography{references}

\end{document}